\DeclareMathOperator*{\argmax}{arg\,max}
\newtheorem{theorem}{Theorem}
\newcommand{\tpm}{$\pm$}
\title{Joint Text and Label Generation for Spoken Language Understanding}
\author {
        Yang Li,\textsuperscript{\rm 1}\footnote{work done while interning at Amazon}
        Ben Athiwaratkun, \textsuperscript{\rm 2}
        Cicero Nogueira dos Santos, \textsuperscript{\rm 2}
        Bing Xiang, \textsuperscript{\rm 2}\\
}
\begin{document}

\maketitle

\begin{abstract}
Generalization is a central problem in machine learning, especially when data is limited. Using prior information to enforce constraints is the principled way of encouraging generalization. In this work, we propose to leverage the prior information embedded in pretrained language models (LM) to improve generalization for intent classification and slot labeling tasks with limited training data. Specifically, we extract prior knowledge from pretrained LM in the form of synthetic data, which encode the prior implicitly. 
We fine-tune the LM to generate an \emph{augmented language}, which contains not only text but also encodes both intent labels and slot labels. 
The generated synthetic data can be used to train a classifier later.
Since the generated data may contain noise, we rephrase the learning from generated data as learning with noisy labels. We then utilize the mixout regularization for the classifier and prove its effectiveness to resist label noise in generated data. Empirically, our method demonstrates superior performance and outperforms the baseline by a large margin.
\end{abstract}

\section{Introduction}

\begin{figure*}
    \centering
    \includegraphics[width=.98\linewidth]{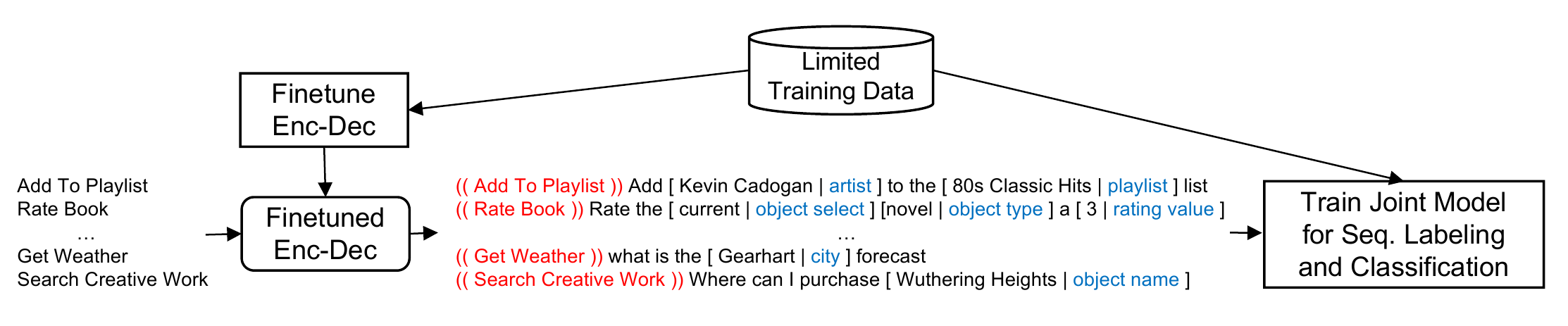}
    \caption{Proposed pipeline for spoken language understanding via data augmentation with joint generation of text and labels.}
    \label{fig:pipeline}
\end{figure*}

Natural language processing has been profoundly impacted by leveraging pretrained large-scale language models. Many downstream tasks have achieved state-of-the-art performance by fine-tuning pretrained models \cite{devlin2018bert,radford2019language,raffel2019exploring}. The success lies in the transfer ability of those models trained with large unlabeled corpus, which are typically thought of learning universal language representations \cite{howard2018universal}. Although the wide adoption, fine-tuning still requires a large dataset to achieve good performance. In some cases, however, it would be inconvenient, expensive, or even impossible to collect a large dataset. For instance, when adapting a personal voice assistant to a specific user, it is inconvenient to require the user to label a lot of utterances.

Learning with a few examples, or few-shot learning, is an active research area recently. However, classic few-shot mechanism requires to accumulate prior knowledge by training over a large set of related tasks, which essentially delegates the burden of collecting a large training dataset to collecting a large set of training tasks. More detrimentally, the distribution shift between training and testing tasks caused by different target classes can potentially lead to poor generalization \cite{guan2020few}. 

In this work, we focus on a different setting, where we only have access to very limited data from each testing domains. Learning with limited data is challenging since the modern over-parametrized models can easily overfit the small training dataset while simple models usually suffer from insufficient representative power. According to the Bayesian Occam's razor theory \cite{mackay1992bayesian}, exploiting prior information is a principled way of encouraging generalization when faced with limited data. In this work, we leverage the priors embedded in pretrained language models.

Using language model as a prior is not new. In statistical machine translation literature, language models have been used as priors for decades \cite{brown-etal-1993-mathematics}. Neural machine translation recently have also attempted to use language models as priors \cite{baziotis2020language}. Generally speaking, fine-tuning a pretrained language model can be regarded as updating the empirical model prior to a task-specific posterior \cite{kochurov2018bayesian}. For generative tasks, like machine translation, priors can be integrated by directly fine-tuning \cite{ramachandran2016unsupervised,skorokhodov2018semi} or constraining the output distribution to have high probability under prior language models \cite{baziotis2020language}. However, integrating priors for discriminative tasks is not straight-forward since they typically require to add additional classification layers on top of the backbone. It is difficult to find proper priors for those classification heads. Moreover, training from scratch could lead to overfitting.

Following the Bayesian view of data augmentation \cite{zhang2016understanding,dao2019kernel}, we express synthetic data as an implicit form of prior that encodes data and task invariances. In our proposed approach, 
prior information is distilled by generating task-specific synthetic data from pretrained language models. In order to generate task-specific data, we fine-tune the language models over the small training dataset. The augmented datasets are then used to train the classifiers. The synthetic data embody the prior by teaching the classifier about possible utterances for each label. The generation process also bears similarity to knowledge distillation \cite{hinton2015distilling}. Here, we distill prior knowledge from pretrained language models.

We focus on the tasks of intent classification and slot labeling, which correspond to sentence and token level classification tasks, respectively. Therefore, we require the synthetic data to contain both intent and slot labels. Existing conditional generation frameworks \cite{anaby2020not,yoo2019data} for data augmentation typically generate new sentences given a specific sentence level label, which cannot generate slot labels. A naive approach that condition on both intent labels and slot labels will not work, since there could be a combinatorial number of possible slot label sequences. Moreover, most of the combinations are invalid since they might never appear in real spoken languages. Another way of generating both sentences and labels is to factorize it using chain rule and introduce a labeling model to obtain labels for generated sentences. However, this is oxymoron since a good labeling model is exactly our ultimate goal.

To generate text (utterances) and labels simultaneously, we leverage an augmented language format \cite{anonymous}, 
where intent and slot label information is embedded in the generated sentences using an special format (see Fig.~\ref{fig:aug_lan}).
We fine-tune conditional language models to directly generate the augmented sentences. We test and compare multiple conditional generation strategies for synthesizing data with legitimate intent and slot labels.

Since we only have access to a very limited data to fine-tune the language model, it is inevitable that the generated data would contain noise. To resist label noise, we apply a recently proposed regularization method called mixout \cite{lee2019mixout}. Mixout is originally proposed to improve the generalization of large-scale language models. Here, we prove and empirically show that mixout regularized models are robust to label noise.

Our contributions are as follows:
\begin{itemize}
\item We leverage prior information embedded in pretrained language models to improve generalization of classification models. Specifically, we extract prior knowledge in the form of synthetic data and augment the small datasets with synthesized data and labels.
\item We utilize an augmented language format to simultaneously generate sentences and the corresponding intent and slot labels using multiple different conditional generation strategies. We empirically demonstrate the effectiveness of the augmented language format for data augmentation, especially with slot labels.
\item We propose two metrics that measure the correspondence between generated tokens and labels. Empirically, we show better correlation between the proposed metrics and the downstream task performance.
\item We reinterpret mixout as a regularization that resist label noise and prove its generalization bound under mild assumptions. We show that mixout regularized models guarantee the generalization on clean data.
\item We significantly outperform BERT-based baselines for joint slot labeling and intent classification in limited data regime. On SNIPS dataset, our proposed method improves the baseline for the slot labeling task by more than 30\% in F1.
\end{itemize}

\section{Methods}
In this section, we formally describe the problem and introduce our proposed method, which employs three main steps as illustrated in Fig \ref{fig:pipeline}: 
finetuning of a pretrained encoder-decoder (enc-dec) language model (our \emph{conditional generator}) using the small labeled set;
generation of synthetic data;
and training of the \emph{joint classifier} using the augmented training set.
We detail our conditional generative model and describe the modifications we made to deal with noisy generations. Finally, we introduce the evaluation metrics to compare generations and present our training procedure.

\begin{figure*}[t!]
    \centering
    \includegraphics[width=0.9\linewidth]{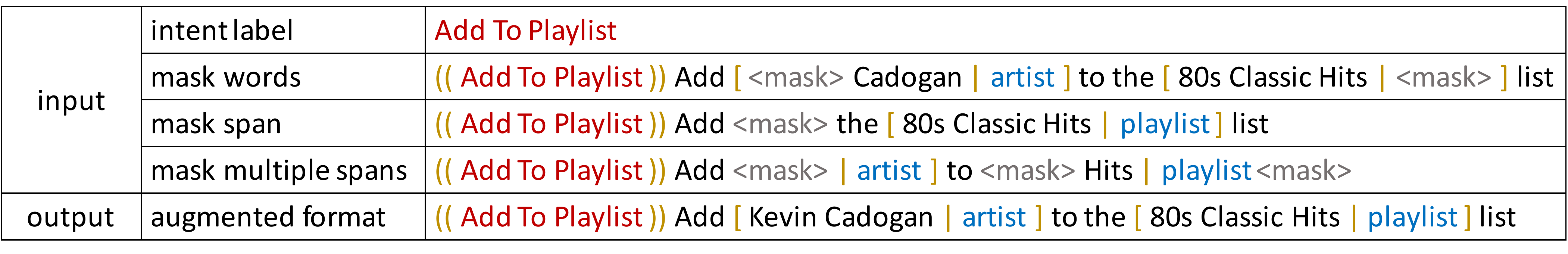}
    \caption{Input and output format of the conditional generator.}
    \label{fig:generator_io}
\end{figure*}

\subsection{Problem Formulation}
In this work, we are interested in slot labeling and intent classification. We focus on the limited data regime where a very small training dataset $\{(x_i,s_i,y_i)\}_{i=1}^{N}$ is given. $x$, $s$ and $y$ represent sentence, slot labels and intent label respectively. We will build a model $f(x;\mathbf{\theta}) = p_\mathbf{\theta}(s,y \mid x)$ to jointly classify slots and intents for novel sentences $x$:
\begin{equation*}
    s^*, y^* = \argmax_{s,y} p_\mathbf{\theta}(s, y \mid x)
\end{equation*}
We augment the small training set with synthetic data $\{(x',s',y')\}_{j=1}^{N'}$ generated from pretrained language models and train a joint classifier to predict intent and slot labels simultaneously. We will describe the generative and discriminative components below.

\subsection{Conditional Generator}
In this section, we introduce the generative component for drawing new training instances. First, we describe the augmented language format \cite{anonymous} utilized to generate utterances (text) and labels simultaneously. As shown in the last row in Fig.~\ref{fig:generator_io}, we use additional markers to indicate the token-spans and their associated labels. The augmented format can be converted from and to the traditional BIO format without loss of information (see Fig.~\ref{fig:aug_lan} in Appendix). With the training data converted into the augmented format, we can train a generative model to capture the joint distribution of the utterances and the labels, i.e., $p(x,s,y)$.

There are many options for modeling the joint distributions, such as VAE-based models \cite{kingma2013auto,yang2017improved}, autoregressive models \cite{radford2019language} and GANs \cite{goodfellow2014generative,de2019training}. In this work, we employ a seq2seq model \cite{raffel2019exploring} due to its verified ability for language modeling. It also gives us the flexibility to condition on additional information. That is, we essentially model the conditional distributions $p(x,s,y \mid c)$, where $c$ represents the conditioning information described later. The conditional generation mechanism enables us to control the generation quality and diversity by varying the conditioning input $c$.

We explore multiple different types of conditioning: 1) condition on the intent labels; 2) mask out several words from the augmented sentence at random; 3) mask out a random span from the augmented sentence; 4) mask out multiple spans. Please refer to Fig.~\ref{fig:generator_io} for an illustration.

Conditioning on intent label is simple and has been applied in many literature \cite{anaby2020not}. However, it cannot specify the possible slot labels for given intent labels. It leaves the burden of learning the correspondence between tokens and slot labels entirely on the model. Naively adding slot labels to the conditioning input will not work, since there are combinatorial number of possible slot label sequences and not all of them are valid combinations. 

Using masking scheme has been an effective way of learning language models \cite{devlin2018bert}. Here, we condition on a masked version of the augmented sentences so that the generator can partially leverage the correspondence between tokens and slot labels in the conditioning inputs. This format of conditioning is similar to rephrase a template where some tokens are replaced by synonyms extracted from a knowledge base \cite{wei2019eda}. Here, the masked tokens are replaced by other tokens extracted from the language models.

\subsubsection{Sampling and Filtering}
Sampling from the conditional generator is straight-forward, we just need to condition on the corresponding inputs. Intent labels are sampled from all possible labels and transformed to natural words separated by white spaces. Masked conditioning inputs are constructed by randomly sampling one augmented sentence from the given training set and replacing some tokens or spans with a mask token. The generated augmented sentences can then be decoded into BIO format and used to train the downstream models. However, generations could be noisy, some of them might not follow the exact format of the augmented language, thus cannot be decoded into the corresponding BIO format; some generations might contain invalid labels. We postprocess the generations by simply dropping those generations.

\subsection{Joint Classifier}
After generating the synthetic data, we train a classifier to jointly predict intent and slot labels using both real and synthetic data. We again utilize a pretrained transformer model and add two classification heads above the language model backbone. The classification heads and the backbone are jointly trained with cross entropy loss.

\paragraph{Noisy Labels}
Although we carefully tune the generator and filter the invalid samples to generate high quality data, it is inevitable to contain noise in the generated data. For example, some tokens might be labeled incorrectly in the synthetic data. The label noise can be detrimental since modern over-parametrized models can easily overfit to the noisy labels \cite{zhang2016understanding}. We rephrase learning with generated data as learning with noisy labels thus connect it with a well-studied literature \cite{song2020learning}. As a simple modification, we apply a recently proposed regularization, mixout \cite{lee2019mixout}. Mixout is originally proposed to improve the generalization of fine-tuning large-scale language models. It has been shown effective particularly when training data is limited. Here, we provide another perspective by showing that mixout regularization is naturally robust to label noise.

Mixout generalizes dropout by replacing the dropped parameters with pretrained parameters instead of zeros. We denote the mixture function between new parameters $\mathbf{w}$ and pretrained parameters $\mathbf{u}$ as $\Phi(\mathbf{w};\mathbf{u},\mathbf{M})$:
\begin{equation*}
    \Phi(\mathbf{w};\mathbf{u},\mathbf{M}) = \mu^{-1}(\mathbf{M}\mathbf{w}+(\mathbf{I}-\mathbf{M})\mathbf{u}-(1-\mu)\mathbf{u}).
\end{equation*}
$\mathbf{M}=\text{diag}(M_1,M_2,\ldots,M_d)$ represents the binary mask matrix, which is specified as random matrix during training and set to ones during testing. The mask matrix $\mathbf{M}$ satisfies $E(M_i)=\mu$ and $Var(M_i)=\sigma^2$ for all $i$. As proven in \cite{lee2019mixout} and repeated here in Theorem~\ref{th:mixout} for completeness, a strongly convex loss for mixout regularized parameters is lower bounded by a $L_2$ regularized loss.

\begin{theorem}\label{th:mixout}
Assume the loss function $\mathcal{L}$ is strongly convex, and the mask matrix $\mathbf{M}$ satisfies $E(M_i)=\mu$ and $Var(M_i)=\sigma^2$ for all $i$, then there exists $m > 0$ such that
\begin{equation*}
    \mathbb{E}\left[\mathcal{L}\left(\Phi(\mathbf{w};\mathbf{u},\mathbf{M})\right))\right] \geq \mathcal{L}(\mathbf{w}) + \frac{m\sigma^2}{2\mu^2}\Vert\mathbf{w}-\mathbf{u}\Vert
    _2^2.
\end{equation*}
The equality holds for quadratic loss functions. 
\end{theorem}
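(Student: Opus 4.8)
The plan is to combine two ingredients: the observation that the mixout perturbation $\Phi(\mathbf{w};\mathbf{u},\mathbf{M})$ is an \emph{unbiased} estimate of $\mathbf{w}$, and the quadratic lower bound on $\mathcal{L}$ supplied by strong convexity. First I would compute $\mathbb{E}[\Phi(\mathbf{w};\mathbf{u},\mathbf{M})]$: since $E(M_i)=\mu$, linearity of expectation gives
\begin{equation*}
\mathbb{E}[\Phi(\mathbf{w};\mathbf{u},\mathbf{M})] = \mu^{-1}\bigl(\mu\mathbf{w}+(1-\mu)\mathbf{u}-(1-\mu)\mathbf{u}\bigr) = \mathbf{w}.
\end{equation*}
Next, let $m>0$ be the modulus of strong convexity of $\mathcal{L}$, so that for all $\mathbf{a},\mathbf{b}$ we have $\mathcal{L}(\mathbf{a}) \geq \mathcal{L}(\mathbf{b}) + \langle\nabla\mathcal{L}(\mathbf{b}),\mathbf{a}-\mathbf{b}\rangle + \tfrac{m}{2}\|\mathbf{a}-\mathbf{b}\|_2^2$. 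I would apply this with $\mathbf{b}=\mathbf{w}$ and $\mathbf{a}=\Phi(\mathbf{w};\mathbf{u},\mathbf{M})$ and then take expectation over $\mathbf{M}$. Because $\nabla\mathcal{L}(\mathbf{w})$ does not depend on $\mathbf{M}$ and $\mathbb{E}[\Phi(\mathbf{w};\mathbf{u},\mathbf{M})-\mathbf{w}]=0$, the linear term vanishes, leaving
\begin{equation*}
\mathbb{E}\bigl[\mathcal{L}(\Phi(\mathbf{w};\mathbf{u},\mathbf{M}))\bigr] \geq \mathcal{L}(\mathbf{w}) + \frac{m}{2}\,\mathbb{E}\bigl\|\Phi(\mathbf{w};\mathbf{u},\mathbf{M})-\mathbf{w}\bigr\|_2^2.
\end{equation*}

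It then remains to evaluate the second moment, which I would do coordinatewise. Expanding and simplifying the $i$-th coordinate, $\Phi(\mathbf{w};\mathbf{u},\mathbf{M})_i - w_i = \mu^{-1}\bigl(M_i w_i + (1-M_i)u_i - (1-\mu)u_i\bigr) - w_i$, one checks that the $u_i$ and $w_i$ terms recombine into $\mu^{-1}(M_i-\mu)(w_i-u_i)$. Hence $\mathbb{E}[(\Phi_i-w_i)^2] = \mu^{-2}\,\mathrm{Var}(M_i)\,(w_i-u_i)^2 = (\sigma^2/\mu^2)(w_i-u_i)^2$, and summing over coordinates gives $\mathbb{E}\|\Phi(\mathbf{w};\mathbf{u},\mathbf{M})-\mathbf{w}\|_2^2 = (\sigma^2/\mu^2)\|\mathbf{w}-\mathbf{u}\|_2^2$. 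Substituting into the displayed inequality yields the claimed bound. For the equality statement, I would note that when $\mathcal{L}$ is quadratic its second-order Taylor expansion about any point is exact, so the strong-convexity inequality holds with equality everywhere, and every inequality in the derivation above is in fact an equality.

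The argument is entirely elementary; there is no real obstacle, only two spots deserving care. One is the coordinatewise algebra that collapses $\mu^{-1}(M_i w_i + (1-M_i)u_i - (1-\mu)u_i) - w_i$ into the single term $\mu^{-1}(M_i-\mu)(w_i-u_i)$ — this is what makes the variance bound come out cleanly in terms of $\|\mathbf{w}-\mathbf{u}\|_2^2$. The other is the vanishing of the cross term after taking expectation, which relies precisely on $\Phi$ being unbiased for $\mathbf{w}$ together with $\nabla\mathcal{L}(\mathbf{w})$ being deterministic; this is exactly why the lower bound is anchored at $\mathcal{L}(\mathbf{w})$ rather than at $\mathcal{L}$ evaluated at some perturbed point.
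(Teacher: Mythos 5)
Your proof is correct, and it is essentially the standard argument (the paper itself does not prove this statement but imports it from the cited mixout reference, whose proof follows the same route): show $\Phi(\mathbf{w};\mathbf{u},\mathbf{M})$ is an unbiased perturbation of $\mathbf{w}$, apply the strong-convexity lower bound at $\mathbf{w}$ so the linear term vanishes in expectation, and compute the coordinatewise variance $\mathbb{E}[(\Phi_i-w_i)^2]=(\sigma^2/\mu^2)(w_i-u_i)^2$ via the identity $\Phi_i-w_i=\mu^{-1}(M_i-\mu)(w_i-u_i)$. All of these steps check out.

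One small caveat on the equality clause: for a general quadratic with Hessian $\mathbf{H}$, taking expectations gives $\mathbb{E}[\mathcal{L}(\Phi)]=\mathcal{L}(\mathbf{w})+\tfrac{1}{2}\sum_i H_{ii}\,\mu^{-2}\sigma^2 (w_i-u_i)^2$ (assuming independent or at least uncorrelated $M_i$), which equals $\mathcal{L}(\mathbf{w})+\tfrac{m\sigma^2}{2\mu^2}\Vert\mathbf{w}-\mathbf{u}\Vert_2^2$ only when $H_{ii}=m$ for all $i$ (e.g.\ $\mathbf{H}=m\mathbf{I}$), not for every strongly convex quadratic. Your statement that the strong-convexity inequality ``holds with equality everywhere'' for quadratics is therefore slightly too strong, though the theorem as stated is imprecise in exactly the same way, so this does not detract from the argument.
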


Therefore, optimizing a mixout regularized model results in minimizing a $L_2$ regularized loss when the loss function is strongly convex. Inspired by the reasoning in \cite{hu2019simple}, we show that mixout regularized model converges to the kernel ridge regression solution for sufficiently wide networks with MSE loss\footnote{Although cross entropy loss is used for experiments, MSE loss is simpler to derive theoretical results.}.

\begin{theorem}
\label{theor:mixout}
Assume the neural network is infinitely wide, the initialization satisfies $f(x;u) = 0$, then training the mixout regularized network $f(x;\Phi(\mathbf{w};\mathbf{u},\mathbf{M}))$ with MSE loss $\frac{1}{2}\Vert f(x;\Phi(\mathbf{w};\mathbf{u},\mathbf{M}))-y\Vert_2^2$ leads to the kernel ridge regression solution:
\begin{equation*}
    f^*(x;w)=k(x, \mathbf{X})^T(k(\mathbf{X}, \mathbf{X})+\lambda^2\mathbf{I})^{-1}\mathbf{Y}, \quad \lambda^2 = \frac{m\sigma^2}{\mu^2},
\end{equation*}
where $\mathbf{X}$ and $\mathbf{Y}$ are training data and labels. The kernel $k(\cdot,\cdot)$ is induced by the initial network and named Neural Tangent Kernel:
\begin{equation*}
    k(x,x') = \langle\phi(x), \phi(x')\rangle = \langle\nabla_\mathbf{\theta} f(x;\mathbf{u}),\nabla_\mathbf{\theta} f(x';\mathbf{u})\rangle.
\end{equation*}
\end{theorem}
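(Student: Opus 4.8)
The plan is to combine two ingredients: the lower bound from Theorem~\ref{th:mixout}, which tells us that mixout regularization is (asymptotically, under strong convexity / for quadratic losses) equivalent to an explicit $L_2$ penalty with coefficient $\lambda^2 = m\sigma^2/\mu^2$, and the infinite-width linearization of the network around initialization. In the NTK regime, the map $\mathbf{w} \mapsto f(x;\mathbf{w})$ is well approximated by its first-order Taylor expansion $f(x;\mathbf{w}) \approx f(x;\mathbf{u}) + \langle \nabla_{\mathbf{\theta}} f(x;\mathbf{u}), \mathbf{w}-\mathbf{u}\rangle$, and by the assumption $f(x;\mathbf{u}) = 0$ this is just $\langle \phi(x), \mathbf{w}-\mathbf{u}\rangle$ with $\phi(x) = \nabla_\mathbf{\theta} f(x;\mathbf{u})$. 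So the learning problem collapses to \emph{linear} regression in the feature map $\phi$.

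First I would make the reduction precise: substitute the linearized model into the MSE objective and add the $L_2$ term coming from mixout (via Theorem~\ref{th:mixout}, using that MSE is quadratic so the inequality there is an equality), obtaining the ridge objective $\tfrac12 \sum_i (\langle \phi(x_i), \mathbf{v}\rangle - y_i)^2 + \tfrac{\lambda^2}{2}\Vert \mathbf{v}\Vert_2^2$ in the variable $\mathbf{v} = \mathbf{w}-\mathbf{u}$. Second, I would solve this strictly convex problem by setting the gradient to zero: the normal equations give $(\Phi^\top \Phi + \lambda^2 \mathbf{I})\mathbf{v} = \Phi^\top \mathbf{Y}$ where $\Phi$ stacks the $\phi(x_i)$ as rows, hence $\mathbf{v}^* = (\Phi^\top\Phi + \lambda^2 \mathbf{I})^{-1}\Phi^\top \mathbf{Y}$. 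Third, I would push this through the ``kernel trick'' / push-through identity $(\Phi^\top\Phi + \lambda^2\mathbf{I})^{-1}\Phi^\top = \Phi^\top(\Phi\Phi^\top + \lambda^2\mathbf{I})^{-1}$ and evaluate the predictor at a test point: $f^*(x;\mathbf{w}) = \langle \phi(x), \mathbf{v}^*\rangle = \phi(x)^\top \Phi^\top(\Phi\Phi^\top + \lambda^2\mathbf{I})^{-1}\mathbf{Y} = k(x,\mathbf{X})^\top (k(\mathbf{X},\mathbf{X}) + \lambda^2\mathbf{I})^{-1}\mathbf{Y}$, using $k(x,x') = \langle\phi(x),\phi(x')\rangle$. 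This is exactly the claimed kernel ridge regression formula with $\lambda^2 = m\sigma^2/\mu^2$.

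The remaining step — and the part I expect to be the main obstacle — is justifying that gradient-flow (or gradient-descent) training of the \emph{actual} mixout-regularized nonlinear network, not the linearized surrogate, converges to this ridge solution in the infinite-width limit. This is where one must invoke the NTK machinery in the style of \cite{hu2019simple}: one needs that over the course of training the parameters stay in an $O(1/\sqrt{\text{width}})$ ball of initialization, so that the tangent kernel $k(\mathbf{X},\mathbf{X})$ stays essentially constant and the linearization error is negligible, and one must check that adding the $L_2$/mixout term (which is itself what keeps $\mathbf{v}$ bounded) does not break the lazy-training regime — if anything it helps, since the ridge penalty makes the objective strongly convex and pins the minimum near the origin. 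I would also need to confirm that in expectation over the mask $\mathbf{M}$ the mixout gradient coincides with the gradient of the penalized objective (again leveraging Theorem~\ref{th:mixout} and the quadratic-loss equality case), and that $k(\mathbf{X},\mathbf{X}) + \lambda^2\mathbf{I}$ is invertible (immediate since $\lambda^2 > 0$). Uniqueness of the limit then follows from strict convexity of the ridge objective in the linearized feature space.
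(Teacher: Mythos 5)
Your proposal follows essentially the same route as the paper's proof: linearize the infinitely wide network around initialization via the NTK, use the quadratic-loss equality case of Theorem~\ref{th:mixout} to turn mixout into an explicit $L_2$ penalty with $\lambda^2 = m\sigma^2/\mu^2$, solve the resulting ridge regression by the normal equations, and apply the push-through identity to obtain the kernel form. The additional concerns you raise in your last paragraph (justifying that gradient training of the actual nonlinear mixout network stays in the lazy regime and converges to the ridge minimizer) are legitimate, but the paper does not address them either --- its proof simply treats the linearized, $L_2$-penalized objective as the object being minimized, exactly as in your first two steps.
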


The theorem also holds for sufficient wide network, and the initial condition $f(x;u)=0$ can be satisfied by simply manipulating the initialization of the classification heads. For example, initializing the heads as the difference of two same networks with the same initialization. Furthermore, \cite{hu2019simple} proves that the kernel ridge regression solution using neural tangent kernel trained with noisy data achieves a generalization bound on clean data close to the one achieved by training on clean data. Therefore, mixout regularization guarantees the generalization on clean data. In Appendix \ref{sec:proof} we provide a proof for Theorem \ref{theor:mixout}.

In summary, the mixout regularization plays two roles in our model: it regularizes the model so that it does not overfit the small training set; it also helps to resist label noise in generated data.

\subsection{Training Procedure}
As our model consists of two separate components, conditional generator and joint classifier, 
we would like to tune the generator so that the classifier trained with generated data can achieve the best performance on validation set. However, it would be difficult to determine the hyperparameters for generator based on validation set performance, since it requires to train a classifier till convergence. The likelihood of the validation set cannot be used as a metric, since some hyperparameters, like the probability of replacing a token, can affect the underlying conditional distribution $p(x,s,y \mid c)$, and comparing different conditional likelihood is meaningless. Instead, we propose two new metrics that measure the quality of the generation. We empirically verify the correlation between those metrics and the downstream task performance. Hence, we can search hyperparameters for the generator until it achieves the best metric scores without having to train the classifier.

Several metrics have been proposed to evaluate text generation in the past few years. BLEU \cite{papineni2002bleu} and Self-BLEU \cite{lu2018neural} are N-gram based metrics that measure quality and diversity respectively. Perplexity evaluate the likelihood of the generated text over a pretrained language model. \cite{zhao2018adversarially} train another language model over generated text and evaluate the likelihood for a held-out real dataset. Inspired by FID \cite{heusel2017gans}, \cite{semeniuta2018accurate} propose to compute the Frechet Distance over embedding from a pretrained InferSent model. \cite{zhang2019bertscore} leverage the BERT model to extract contextualized embedding and compute the similarity between real and generated text.

However, we notice that these metrics are all measuring the quality of the generated sentence; it does not consider the correspondence between tokens and their labels. For our case, the token-label correspondences are significant for downstream task performance. We propose two new metrics that take the correspondence between tokens and labels into consideration. Specifically, we extract sentence-level features from the augmented sentences and compute the Frechet distance \cite{heusel2017gans} and precision-recall scores \cite{sajjadi2018assessing} between real and generated sentences. The precision and recall scores measure how much of one distribution can be generated by another. Higher recall means a greater portion of samples from the true distribution are covered by the generator; and similarly, higher precision means a greater portion of samples from generator are covered by the real distribution. By using the augmented language format, we are essentially measuring the precision and recall between real and generated joint distributions $p(x,s,y)$. Note that if the joint distribution $p(x,s,y)$ matches, it implies the marginal distribution $p(x)$ also matches, but not vice versa. 

During experiment, we use T5 encoder \cite{raffel2019exploring} to extract contextualized embedding for tokens and average to get a sentence-level embedding. 
We denote the two metrics that use augmented language as input as \texttt{T5-FDa} and \texttt{T5-PRa}.
Since the pretrained T5 models are not trained with the augmented language format, we may desire a specialized T5 model to extract the embedding. We thus fine-tune the T5 model over the validation set for several iterations and denote the corresponding metrics \texttt{ft-T5-FDa} and \texttt{ft-T5-PRa} respectively.

\section{Related Works}
\subsection{Generative Data Augmentation} 
Data augmentation is a common strategy when faced with scarce training data. It has always been a key factor of performance improvement for computer vision applications \cite{shorten2019survey}. Certain transformations, such as shifting, cropping and flipping, can be applied to images, and they preserve the class information. However, augmentation for textual data is challenging. Simple transformations will distort the data and introduce grammar error. Word-level augmentations that replace a certain word with its synonym, delete a word, or change the word order have been shown effective for text classification \cite{wei2019eda}. Sentence-level augmentations, such as back-translation \cite{edunov2018understanding} and paraphrasing \cite{kumar2019submodular}, have also been used to improve downstream task performance. Recently, there has been a surge in using conditional generative models for data augmentation. \cite{antoniou2017data} use GANs to generate new images condition on an exemplar image. \cite{xia2020cg} employ VAE to generate utterence condition on intent labels. The most relevant work \cite{yoo2019data} uses a VAE model to jointly generate utterance and slot labels. However, their generative model does not leverage the pretrained language model and requires a large datasets to train.

\subsection{Noisy Labels}
Many methods have been proposed to learn from noisy labeled data. They can be roughly grouped into the following categories: \cite{vahdat2017toward,lee2018cleannet,li2017learning} rectify the noisy labels by utilizing an additional noise modeling module. \cite{sukhbaatar2014training,patrini2017making} correct the loss by estimating the probability of mislabeling one class to another, a.k.a. the label transition matrix. \cite{malach2017decoupling,jiang2018mentornet,han2018co,yu2019does} modify the training procedure to identify noisy labeled examples or to train with confident examples. \cite{ghosh2017robust,zhang2018generalized,wang2019symmetric,ma2020normalized} propose new loss functions that are robust to label noise. \cite{pereyra2017regularizing,hu2019simple} utilize regularizations to improve robustness of the model. In this work, we rephrase learning with generated data as learning with noisy labels and utilize the mixout regularization to improve robustness.

\section{Experiments}

\begin{table}[]
    \centering
    \caption{Dataset statistics}
    \label{tab:datasets}
    \resizebox{\linewidth}{!}{
    \begin{tabular}{cccccc}
    \toprule
              &  \#intent & \#slots & \#train & \#validation & \#test \\
    \midrule
    SNIPS     &  7       & 39     & 13084  & 700         & 700   \\
    ATIS      &  18      & 83     & 4478   & 500         & 893   \\
    NLUED     &  64      & 54     & 9960   & 1936        & 1076  \\
    \bottomrule
    \end{tabular}}
\end{table}

\begin{table*}[]
    \centering
    \small
    \caption{Slot labeling and intent classification performance with four different sampling ratios. Generator and classifier are fine-tuned T5-large and BERT-large respectively. Mixout with Bernoulli mask ($p=0.05$) is applied to the classifiers to regularize and resist noisy labels. We generate 500 synthetic data per intent for SNIPS and 50 per intent for ATIS and NLUED. Mean and standard deviation are reported using four different seeds.}
    \label{tab:results}
    \begin{subtable}[h]{\textwidth}
    \centering
    \caption{SNIPS}
    \resizebox{\textwidth}{!}{
    \begin{tabular}{ccccc|cccc}
    \toprule
          & \multicolumn{4}{c|}{Slot Labeling (F1)} & \multicolumn{4}{c}{Intent Classificaton (Acc.)} \\
          \cline{2-5}\cline{6-9}
          &  0.25\%  &  0.5\%  &  1\%  &  2\% & 0.25\%  &  0.5\%  &  1\%  &  2\% \\
    \midrule
    JBERT & 45.96\tpm5.16 &  62.22\tpm1.64 & 78.36\tpm1.86 & 88.94\tpm0.92 &  87.11\tpm8.09 & 90.96\tpm0.47 & 96.68\tpm0.40 & 98.21\tpm0.52 \\
    G(factor)+JBERT \cite{yoo2019data} & 48.34\tpm1.54 & 64.73\tpm2.15 & 79.10\tpm1.36 & 88.87\tpm1.21 & 88.91\tpm3.95 & 94.66\tpm0.92 & 96.36\tpm0.82 & 98.02\tpm0.57 \\
    G(intent)+JBERT [ours] & 52.26\tpm4.55 & 74.21\tpm3.62 & \textbf{85.33\tpm0.87} & 89.75\tpm0.45 & 91.68\tpm2.46 & 95.21\tpm2.29 & 97.46\tpm0.97 & 98.43\tpm0.65 \\
    G(words)+JBERT [ours] & 59.12\tpm4.27 & 73.30\tpm2.62 & 84.26\tpm0.64 & 89.86\tpm0.99 & 91.04\tpm1.59 & \textbf{96.50\tpm0.46} & 97.71\tpm0.79 & 98.46\tpm0.37 \\
    G(span)+JBERT [ours] & 61.24\tpm1.36 & \textbf{75.39\tpm2.76} & 84.99\tpm1.30 & 89.27\tpm0.30 & \textbf{93.21\tpm3.75} & 95.36\tpm3.00 & \textbf{97.79\tpm0.46} & \textbf{98.46\tpm0.23} \\
    G(multi\_spans)+JBERT [ours] & \textbf{63.00\tpm4.48} & 74.51\tpm1.67 & 85.03\tpm0.72 & \textbf{90.07\tpm0.91} & 91.68\tpm0.27 & 95.25\tpm0.97 & 97.57\tpm0.39 & 98.43\tpm0.30 \\
    \bottomrule
    \end{tabular}}
    \end{subtable}
    \begin{subtable}[h]{\textwidth}
    \centering
    \caption{ATIS}
    \resizebox{\textwidth}{!}{
    \begin{tabular}{ccccc|cccc}
    \toprule
         & \multicolumn{4}{c|}{Slot Labeling (F1)} & \multicolumn{4}{c}{Intent Classificaton (Acc.)} \\
          \cline{2-5}\cline{6-9}
         &  0.25\% & 0.5\% & 1\% & 2\% &  0.25\% & 0.5\% & 1\% & 2\% \\
    \midrule
    JBERT & 52.20\tpm2.83 & 63.25\tpm4.12 & 71.53\tpm2.52 & 79.67\tpm0.84 & 79.96\tpm1.64 & 83.76\tpm1.18 & 87.07\tpm2.21 & 90.12\tpm0.64 \\
    G(factor)+JBERT \cite{yoo2019data} & 53.68\tpm3.74 & 64.39\tpm3.71 & 72.68\tpm2.93 & 79.45\tpm0.90 & 77.99\tpm9.74 & \textbf{86.53\tpm3.31} & \textbf{91.09\tpm1.09} & \textbf{93.59\tpm1.65} \\
    G(intent)+JBERT [ours] & 55.52\tpm2.39 & 66.48\tpm2.49 & 74.39\tpm1.23 & 79.71\tpm1.67 & 70.46\tpm6.21 & 83.51\tpm5.19 & 87.29\tpm3.41 & 90.99\tpm0.88 \\
    G(words)+JBERT [ours] & \textbf{61.31\tpm1.66} & \textbf{69.49\tpm2.98} & \textbf{74.97\tpm0.73} & \textbf{81.79\tpm1.90} & \textbf{80.43\tpm3.86} & 84.71\tpm1.55 & 88.52\tpm2.45 & 92.86\tpm1.09 \\
    G(span)+JBERT [ours] & 58.08\tpm1.45 & 66.51\tpm2.38 & 72.68\tpm3.08 & 80.30\tpm1.72 & 76.26\tpm3.12 & 80.29\tpm5.39 & 84.71\tpm1.92 & 86.51\tpm2.60 \\
    G(multi\_spans)+JBERT [ours] & 58.21\tpm3.38 & 68.44\tpm2.41 & 73.63\tpm2.23 & 81.30\tpm1.04 & 73.26\tpm14.1 & 83.62\tpm2.45 & 88.27\tpm1.03 & 90.30\tpm3.23 \\
    \bottomrule
    \end{tabular}}
    \end{subtable}
    \begin{subtable}[h]{\textwidth}
    \centering
    \caption{NLUED}
    \resizebox{\textwidth}{!}{
    \begin{tabular}{ccccc|cccc}
    \toprule
         & \multicolumn{4}{c|}{Slot Labeling (F1)} & \multicolumn{4}{c}{Intent Classificaton (Acc.)} \\
          \cline{2-5}\cline{6-9}
         &  0.5\% & 1\% & 2\% & 4\% &  0.5\% & 1\% & 2\% & 4\% \\
    \midrule
    JBERT & 28.58\tpm2.17 & 39.21\tpm1.37 & 53.45\tpm1.66 & 60.93\tpm0.84 & 31.76\tpm1.54 & 53.42\tpm4.09 & 72.35\tpm0.52 & 79.44\tpm0.60 \\
    G(factor)+JBERT \cite{yoo2019data} & 30.34\tpm1.85 & 39.35\tpm0.95 & 52.07\tpm2.51 & 59.62\tpm1.10 & \textbf{59.87\tpm0.93} & \textbf{66.89\tpm1.24} & 74.65\tpm1.64 & \textbf{80.97\tpm0.75} \\
    G(intent)+JBERT [ours] & 32.39\tpm2.96 & 40.90\tpm1.90 & 50.71\tpm1.56 & 58.26\tpm1.03 & 45.24\tpm5.40 & 55.25\tpm3.25 & 72.21\tpm0.82 & 78.69\tpm1.00 \\
    G(words)+JBERT [ours] & 39.53\tpm1.15 & 42.94\tpm2.49 & 54.12\tpm1.54 & 60.07\tpm1.35 & 44.63\tpm4.47 & 58.99\tpm1.79 & 71.75\tpm1.78 & 79.69\tpm1.02 \\
    G(span)+JBERT [ours] & \textbf{42.00\tpm0.95} & \textbf{47.32\tpm1.61} & \textbf{56.11\tpm1.65} & \textbf{60.57\tpm1.08} & 51.97\tpm3.85 & 64.22\tpm0.96 & \textbf{74.86\tpm1.28} & 79.76\tpm0.43 \\
    G(multi\_spans)+JBERT [ours] & 38.94\tpm1.97 & 43.48\tpm0.93 & 52.36\tpm1.35& 59.35\tpm1.46 & 55.11\tpm2.12 & 60.27\tpm2.11 & 72.42\tpm2.75 & 79.86\tpm0.99 \\
    \bottomrule
    \end{tabular}}
    \end{subtable}
\end{table*}

\begin{table*}[]
    \centering
    \small
    \caption{Investigate the correlation between downstream task performance and different metrics. Slot labeling and intent classification performances are evaluated on SNIPS validation set. 100 synthetic data per intent are generated from each generator fine-tuned with 0.25\% of the training data. Numbers are averaged from 4 independent runs. \texttt{tau} is the correlation coefficient. Larger absolute value means stronger correlation.}
    \label{tab:metrics}
    \resizebox{\textwidth}{!}{
    \begin{tabular}{|| c | c |c c c c c | c c ||}
    \hline
    \multicolumn{2}{||c|}{} & factor & intent & words & span & multi\_spans & \multicolumn{2}{c||}{\multirow{2}{*}{tau}} \\
    \cline{1-7}
    \multicolumn{2}{||c|}{Slot labeling (F1) $\uparrow$} & 47.32 & 53.49 & 59.28 & 60.06 & 62.85 & & \\
    \cline{1-9}
    \multicolumn{2}{||c|}{Intent Classification (Acc.) $\uparrow$} & 88.07 & 90.64 & 90.68 & 90.68 & 90.32 & SL & IC \\
    \hline\hline
    \multirow{2}{*}{GPT2-Perplexity $\downarrow$} & real & \multicolumn{5}{c|}{381.46} & \multirow{2}{*}{0.20} & \multirow{2}{*}{0.53}  \\
    \cline{2-7}
                                     & fake & 158.43 & 271.90 & 327.78 & 297.59 & 306.38 & & \\
    \hline\hline
    BLEU-4 $\uparrow$ & real$\leftrightarrow$fake & 0.15 & 0.20 & 0.24 & 0.23 & 0.23 & 0.11 & 0.67 \\
    \hline
    \multirow{2}{*}{Self-BLEU-4 $\downarrow$} & real & \multicolumn{5}{c|}{0.28} & \multirow{2}{*}{0.20} & \multirow{2}{*}{0.53} \\
    \cline{2-7}
                                 & fake & 0.25 & 0.47 & 0.64 & 0.61 & 0.62 & & \\
    \hline\hline
    T5-FD $\downarrow$ & real$\leftrightarrow$fake & 2.150 & 2.627 & 2.201 & 2.180 & 2.284 & 0.20 & 0.11 \\
    \hline
    T5-PR $\uparrow$ & real$\leftrightarrow$fake & (85.4, 88.7) & (79.0, 82.9) & (82.5, 82.7) & (81.2, 85.1) & (80.0, 85.2) & (-0.40, 0.40) & (-0.11, -0.74) \\
    \hline\hline
    T5-FDa $\downarrow$ & real$\leftrightarrow$fake & 0.840 & 0.689 & 0.602 & 0.592 & 0.600 & -0.40 & -0.53 \\
    \hline
    T5-PRa $\uparrow$ & real$\leftrightarrow$fake & (81.7, 80.3) & (81.4, 76.2) & (83.3, 79.1) & (84.3, 79.7) & (84.0, 82.2) & (0.20, 0.40) & (0.32, -0.32) \\
    \hline\hline
    ft-T5-FDa $\downarrow$ & real$\leftrightarrow$fake & 0.490 & 0.361 & 0.317 & 0.306 & 0.306  & -0.53 & -0.44 \\
    \hline
    ft-T5-PRa $\uparrow$ & real$\leftrightarrow$fake & (82.3, 83.7) & (83.9, 76.2) & (84.8, 79.4) & (84.0, 84.7) & (84.2, 84.9) & (0.20, 0.60) & (0.53, -0.11) \\
    \hline
    \end{tabular}}
\end{table*}

In this section, we evaluate our framework on several public benchmark datasets, including SNIPS \cite{coucke2018snips}, ATIS \cite{hemphill1990atis} and NLUED \cite{liu2019benchmarking}. The dataset statistics are shown in Table~\ref{tab:datasets}. To simulate the limited data scenario, we sample a small subset at random from the original training set to train our models. Since the datasets are imbalanced, to make sure the sampled subset contains each intent class, we set a sampling ratio and sample each intent class based on the same ratio. The intent labels and slot labels are transformed to natural words separated by white spaces.

Baseline models include \textbf{JBERT} which directly fine-tune the BERT model with the small training set to jointly predict the intent and the slots. The classification heads are two fully connected layers. We also compare to a factorized augmentation strategy similar to \cite{yoo2019data} and denote it as \textbf{G(factor)+JBERT}, where the augmentation is performed by first sampling a sentence from a fine-tuned T5 model condition on the intent and then obtaining the slot labels from a fine-tuned BERT classifier with mixout regularization, i.e., $p(x,s,y)=p(y)p(x \mid y)p(s \mid x)$. Other classic data augmentation methods, such as EDA \cite{wei2019eda}, are not suitable for out setting since the augmentation may change the slot types. Our proposed methods are denoted as \textbf{G(intent)+JBERT}, \textbf{G(words)+JBERT}, \textbf{G(span)+JBERT} and \textbf{G(multi\_spans)+JBERT} to represent models with generator conditioning on intent labels and different masking schemes respectively. 

We fine-tune the \texttt{T5-large} as the generator and \texttt{BERT-large-cased} as the classifier for the main results. Ablation studies are performed in Sec.~\ref{sec:ablations}. To deal with the small training set size and noisy labels in generated data, mixout regularization is applied to the classifier in all models. During training, we sample each $M_i$ using a Bernoulli distribution with $p=0.05$. We generate 500 synthetic data per intent class for SNIPS dataset and 50 for ATIS and NLUED.

Table~\ref{tab:results} shows the slot labeling and intent classification performance on SNIPS, ATIS and NLUED. We conduct experiments with four different sampling ratios to subsample the training set. Intent classification performance is evaluated by accuracy and the slot labeling performance is evaluated by F1. Mean and standard deviation are reported from four independent runs.

The results (shown in Table~\ref{tab:results}) are consistent across different datasets. For slot labeling, the synthetic data can significantly improve the performance especially when training data is extremely low. For example, with only 0.25\% of the training data, we improve the slot labeling performance on SNIPS from 45.96 to 63.00, which is a 37.08\% relative improvement. Although all generation strategies are useful, conditioning on masked sentences are more effective than others with low training data. We believe that is because the masked sentences can provide more diverse context to synthesize the augmented sentences. It also provides a template so that the generator can leverage the token-label correspondences to generate faithful outputs. We can also see that using augmented language format gives better results than using the factorized generation scheme, which we believe is because of the difficulty of fine-tuning the classification head from scratch with only limited data. Given more real training data, the difference between different generation strategies gets smaller, which is as expected. For intent classification, the trends are similar, but the improvement is relatively small since the intent classification is a relatively simpler task. Using factorized generation is actually competitive, sometimes even better, for intent classification, because the synthetic sentences are generated directly conditioned on the intent labels. Fig~\ref{fig:generation} (Appendix) shows some generated sentences from our generator conditioned on augmented sentences masked with multiple spans. The generation is diverse and fluent. Thanks to the powerful pretrained models, our generator is capable of generating phrases beyond the training data.

\begin{figure*}
    \centering
    \begin{subfigure}[b]{0.24\textwidth}
        \centering
        \includegraphics[width=\textwidth]{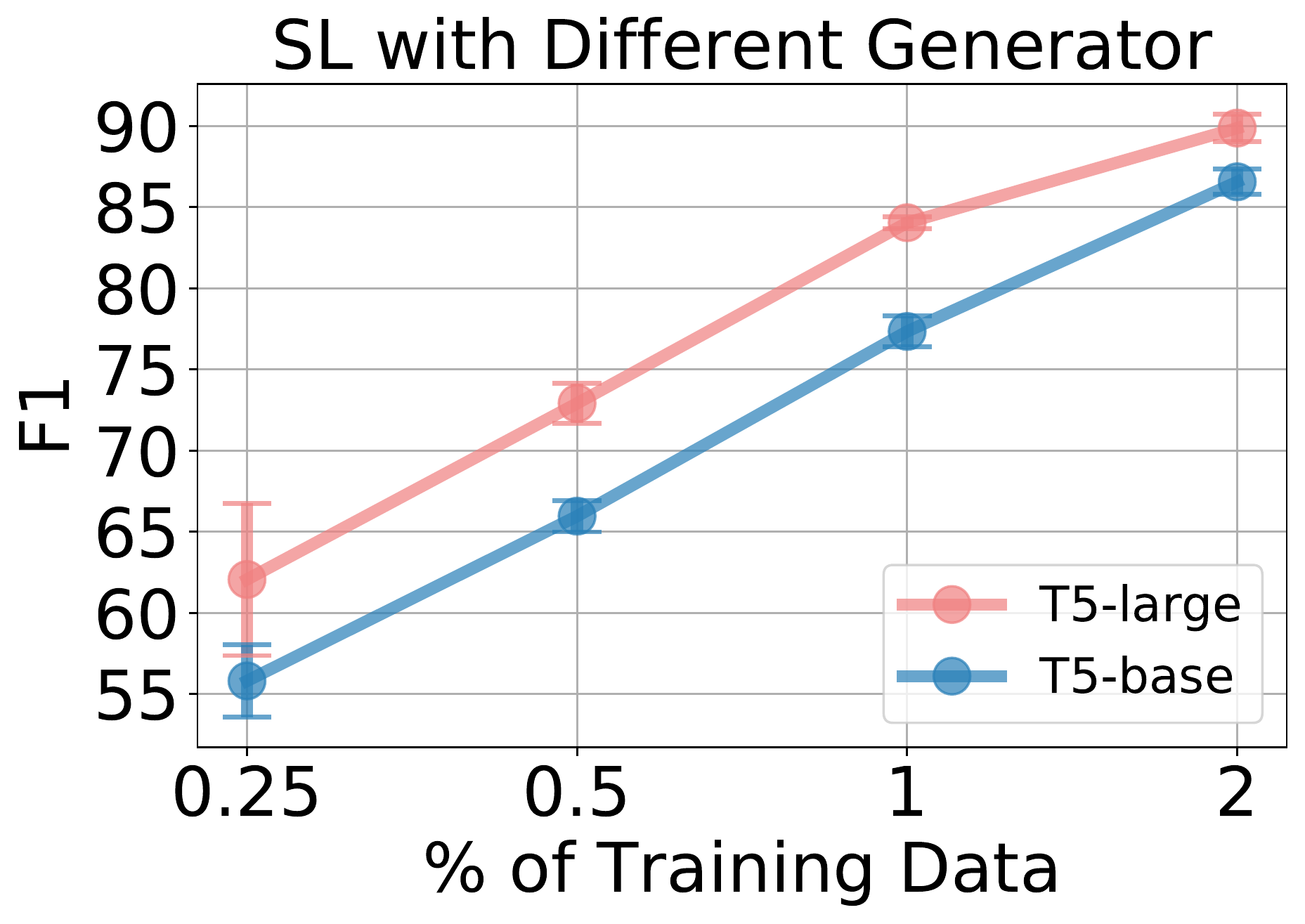}
        \caption{}
        \label{fig:ablation_generator_slots}
    \end{subfigure}
    \hfill
    \begin{subfigure}[b]{0.24\textwidth}
        \centering
        \includegraphics[width=\textwidth]{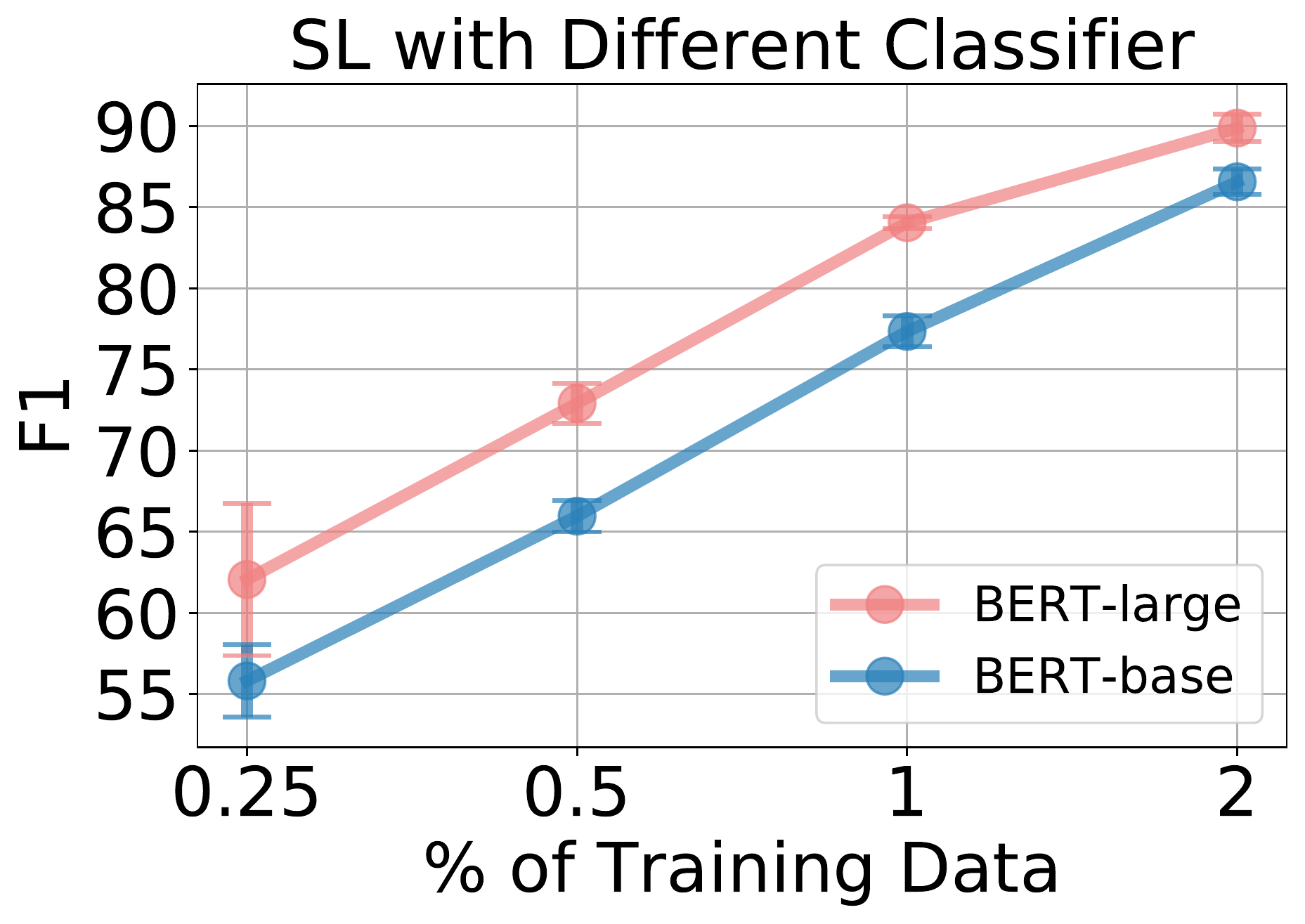}
        \caption{}
        \label{fig:ablation_classsifier_slots}
    \end{subfigure}
    \hfill
    \begin{subfigure}[b]{0.24\textwidth}
        \centering
        \includegraphics[width=\textwidth]{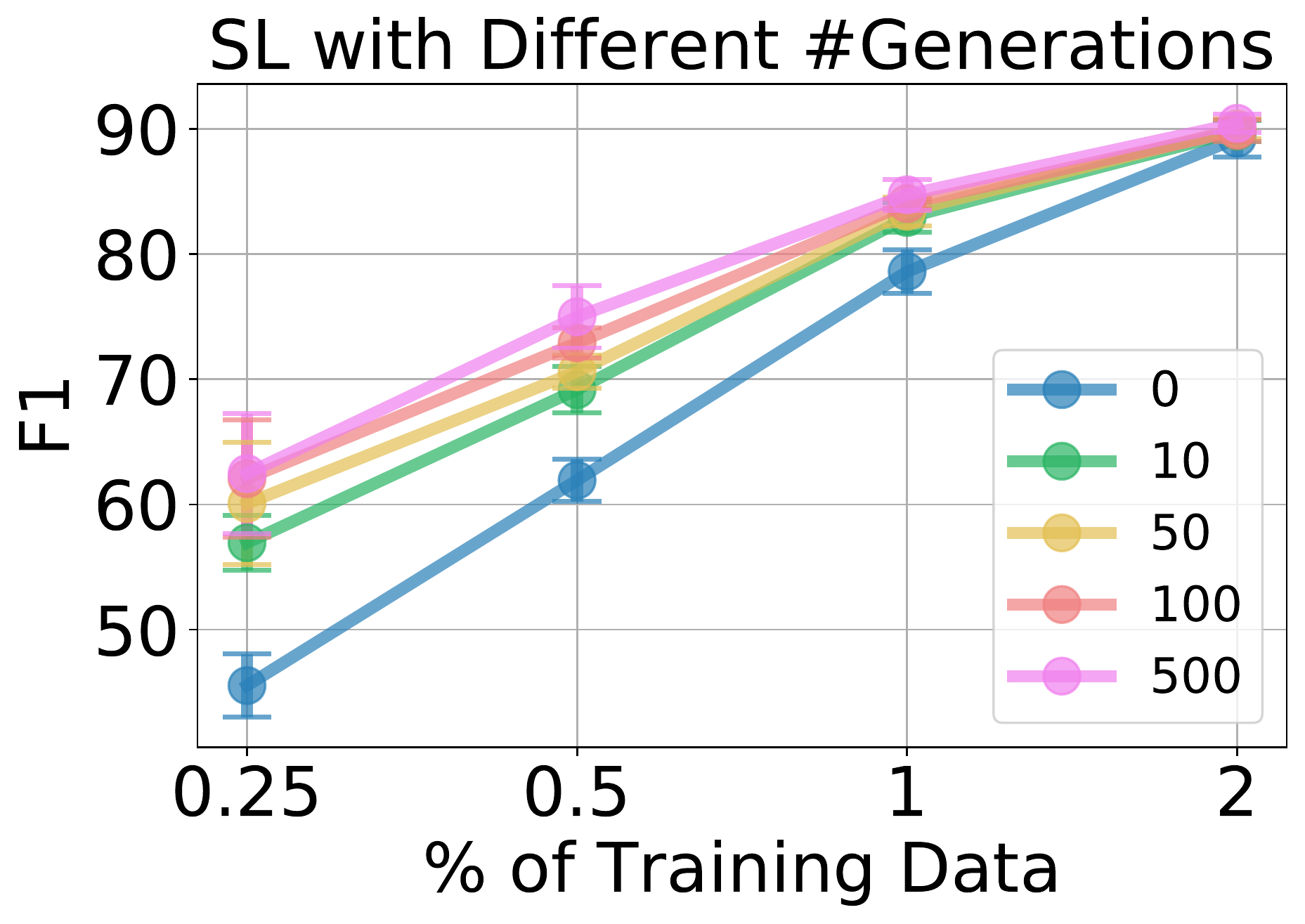}
        \caption{}
        \label{fig:ablation_generations_slots}
    \end{subfigure}
    \hfill
    \begin{subfigure}[b]{0.24\textwidth}
        \centering
        \includegraphics[width=\textwidth]{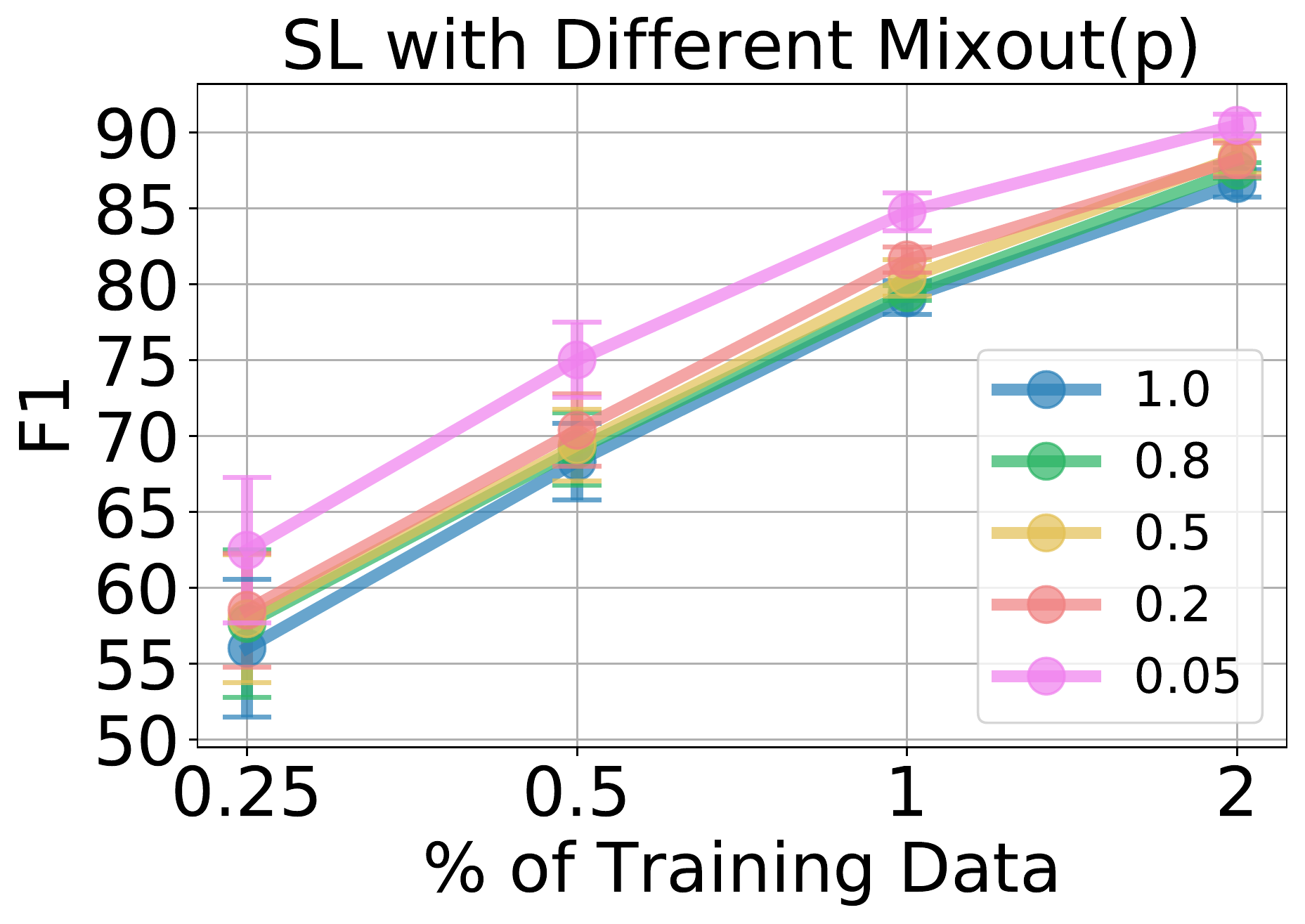}
        \caption{}
        \label{fig:ablation_mixout_slots}
    \end{subfigure}
    \begin{subfigure}[b]{0.24\textwidth}
        \centering
        \includegraphics[width=\textwidth]{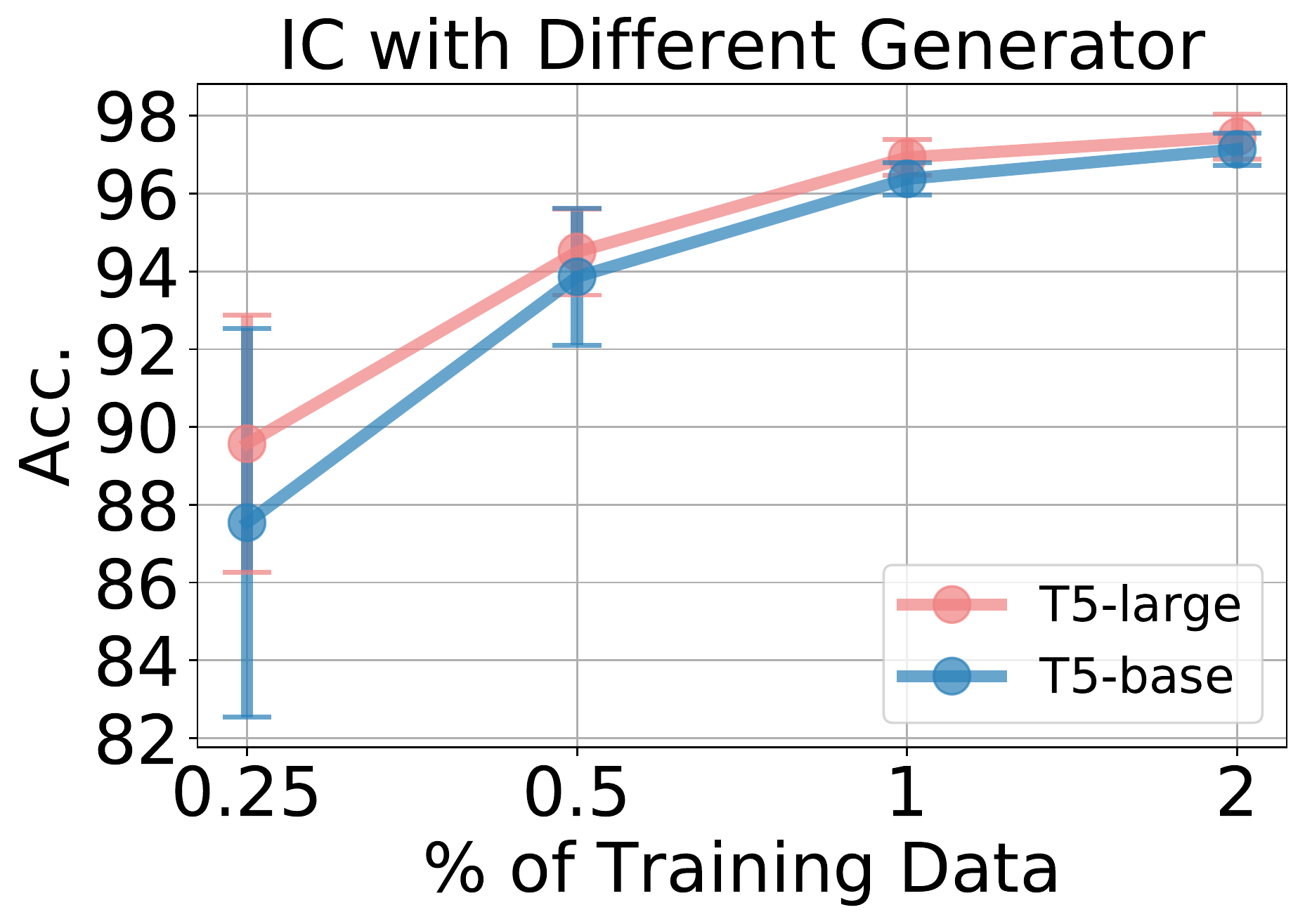}
        \caption{}
        \label{fig:ablation_generator_intent}
    \end{subfigure}
    \hfill
    \begin{subfigure}[b]{0.24\textwidth}
        \centering
        \includegraphics[width=\textwidth]{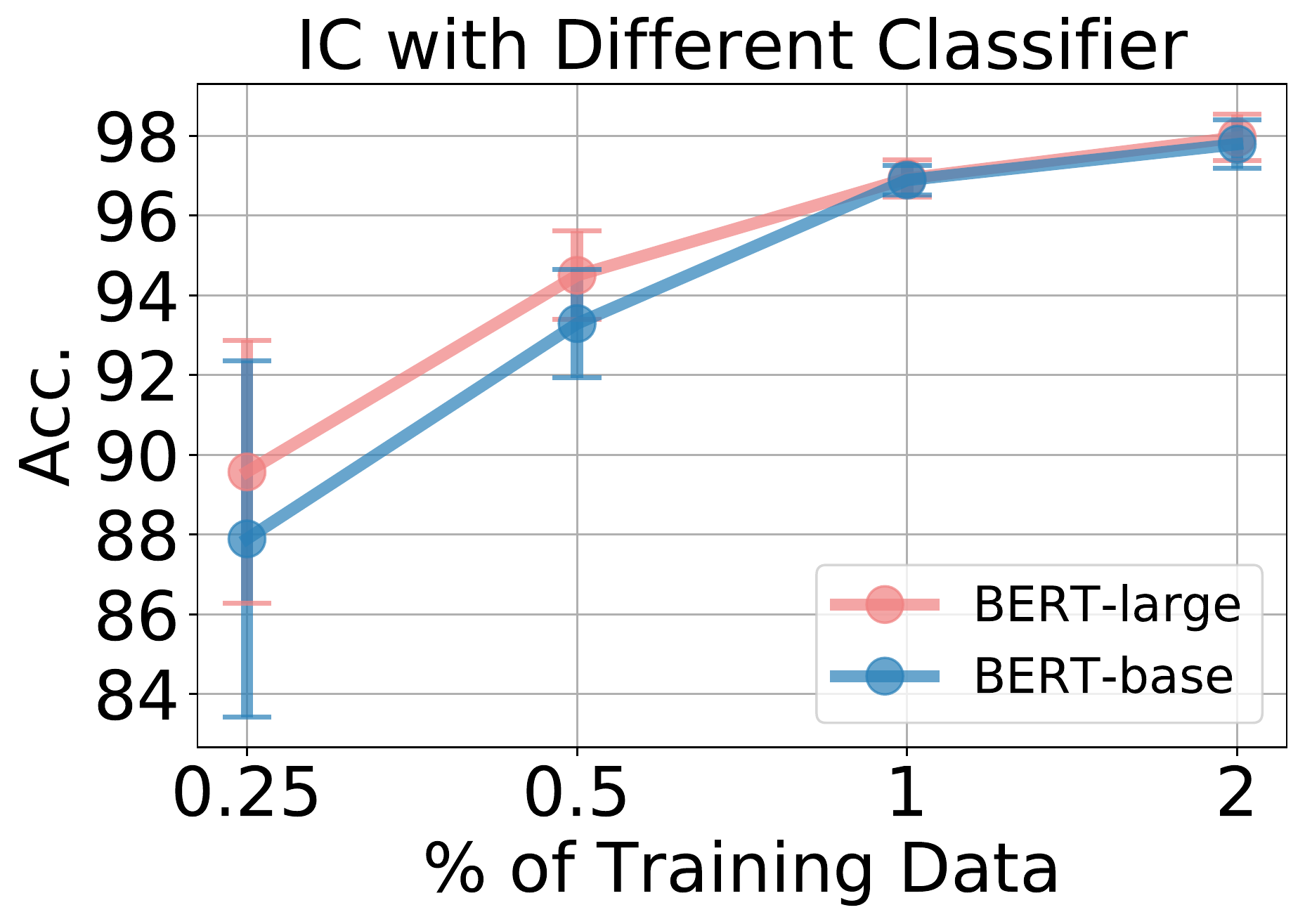}
        \caption{}
        \label{fig:ablation_classifier_intent}
    \end{subfigure}
    \hfill
    \begin{subfigure}[b]{0.24\textwidth}
        \centering
        \includegraphics[width=\textwidth]{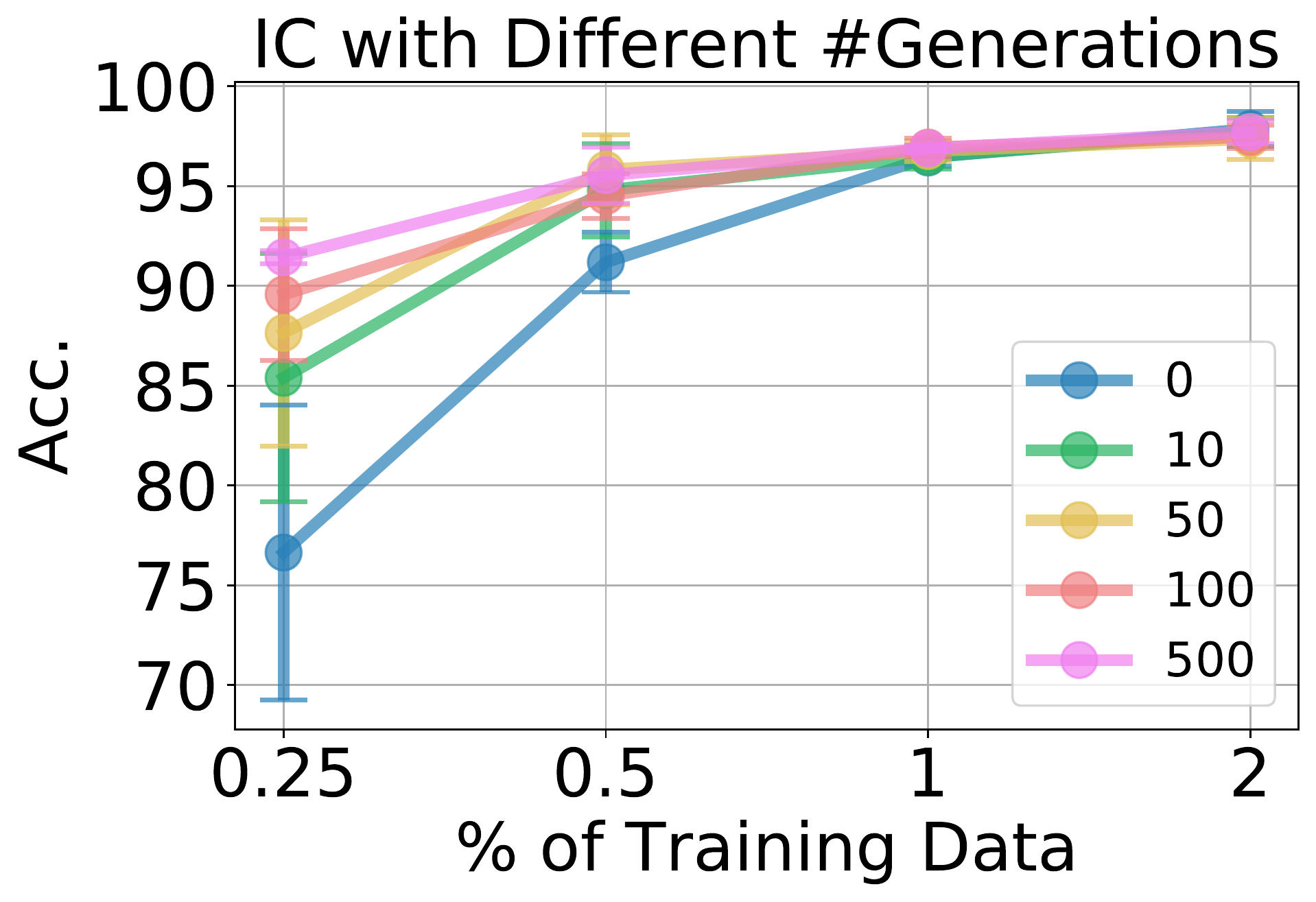}
        \caption{}
        \label{fig:ablation_generations_intent}
    \end{subfigure}
    \hfill
    \begin{subfigure}[b]{0.24\textwidth}
        \centering
        \includegraphics[width=\textwidth]{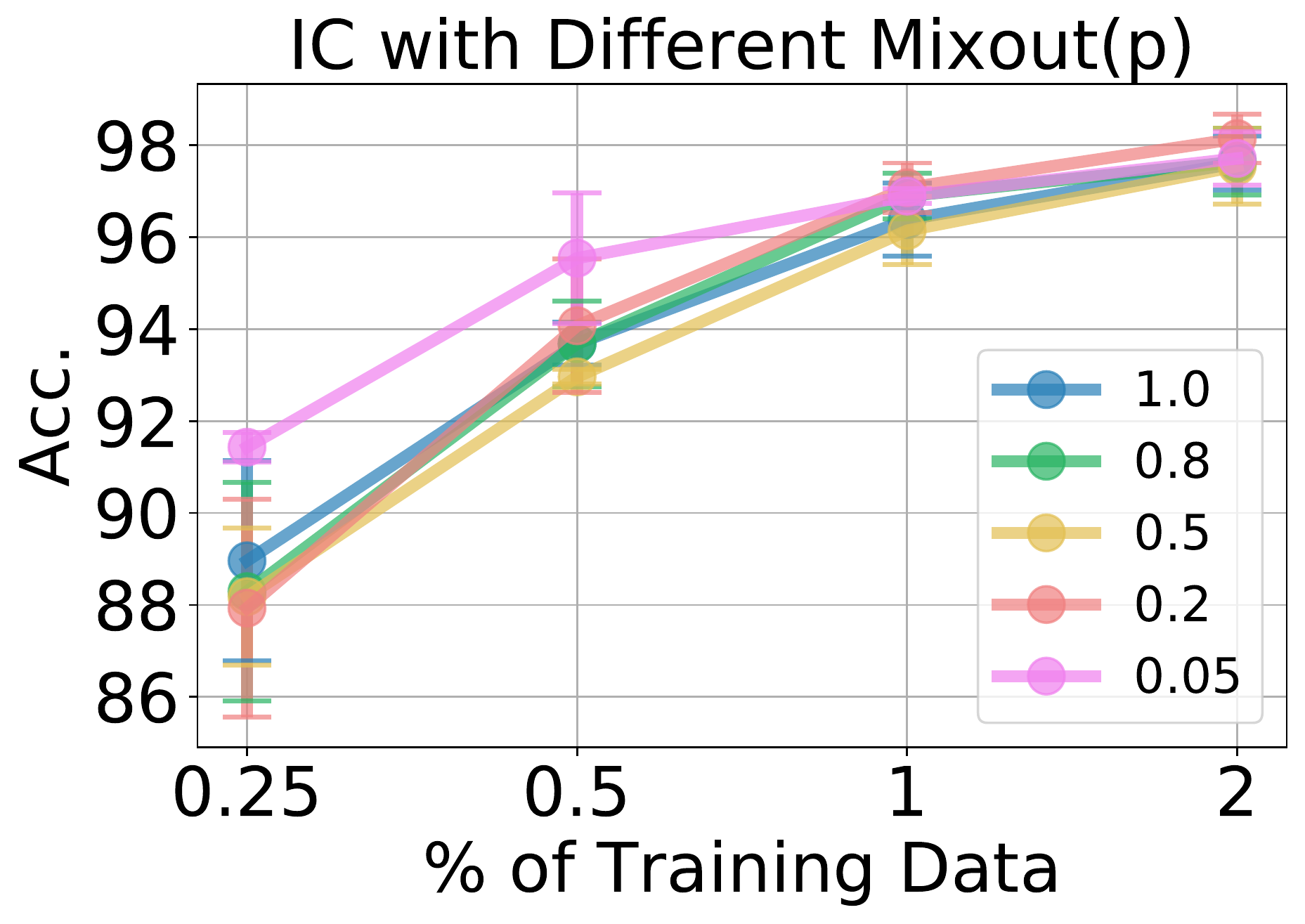}
        \caption{}
        \label{fig:ablation_mixout_intent}
    \end{subfigure}
    \caption{Ablation studies over different components of our framework. First row is for slot labeling, and the second row is for intent classification.}
    \label{fig:ablation}
\end{figure*}

\subsection{Metrics}
In Table~\ref{tab:metrics}, we empirically verify the correlation between several different types of metrics and the downstream task performance. The metrics are evaluated on SNIPS dataset and compare 100 real validation sentences per intent and 100 generated sentences per intent. We also report the slot labeling and intent classification performances on validation set using models trained with the corresponding generated data. The generators are \texttt{T5-large} fine-tuned with 0.25\% of the SNIPS training data using different generation schemes. The classifiers are \texttt{BERT-large-cased} fine-tuned with both small real training data and generated data. We use mixout with Bernoulli mask (p=0.05) for all classifiers. Numbers are averaged from four independent runs. 

The first metric we tested is the likelihood-based perplexity evaluated on a pretrained GPT2 model, which measures the fluency of the generated sentences. Then we tested the N-gram based BLEU and Self-BLEU scores, which quantify the quality and diversity of the generated sentences respectively. We also compute the Frechet distance and precision-recall scores from sentence embeddings extracted from a pretrained T5 encoder. Note that all the metrics mentioned above measure the quality of the generated sentences, i.e., the marginal distribution $p(x)$. We then utilize the augmented language format to evaluate T5-FDa and T5-PRa, which measure the joint distribution $p(x,s,y)$. Since T5 is trained with plain text, we fine-tune it with the augmented language format using the validation set for several iterations to calculate ft-T5-FDa and ft-T5-PRa. To quantitatively measure the correlation, we also calculate the Kendall's tau correlation coefficients for each metric with respect to the two downstream tasks.

From the quantitative measurement, we can see that measuring the marginal distribution cannot indicate the slot labeling performance, while sometimes it does show correlation to intent classification performance. For example, the factorized generation produces much better sentences based on the GPT2-Perplexity, but the slot labeling performance is actually worse than other models. N-gram based BLEU and Self-BLEU show better correlation to intent classification task, while they do not correlate very well with the slot labeling task. We think that is because the n-grams cannot capture the semantic meaning of tokens and they do not consider the correspondence between tokens and slot labels. The T5-FD and T5-PR metrics are counter-intuitive, since we expect T5-FD to have negative correlation and T5-PR to have positive correlation. Using augmented language format to compute these metrics (i.e., T5-FDa and T5-PRa) shows clearly better correlation to the slot labeling task. For precision-recall scores, the precision is better correlated to intent classification and recall is better correlated to slot labeling, which is inline with our intuition that intent classification benefits more from fluent utterances and slot labeling benefits more from diverse tokens.

\subsection{Ablations}\label{sec:ablations}

In this section, we perform ablation studies on different components of our framework. We conduct experiments on SNIPS dataset with generators conditioned on masked sentences. Multiple spans are masked out for conditioning. Fig.~\ref{fig:ablation} shows the results of varying each component respectively. From Fig.~\ref{fig:ablation_generator_slots} and \ref{fig:ablation_generator_intent}, we can see that larger generator leads to better performance, especially for slot labeling. Larger classifier can also improve the performance as shown in Fig.~\ref{fig:ablation_classsifier_slots} and \ref{fig:ablation_classifier_intent}. Fig.~\ref{fig:ablation_generations_slots} and \ref{fig:ablation_generations_intent} show that generating more synthetic data can improve the performance, especially for low data regime. With more real data given, however, the synthetic data influence the final performance less. Mixout regularization can significantly improve the performance as shown in Fig.~\ref{fig:ablation_mixout_slots} and \ref{fig:ablation_mixout_intent}. Stronger regularization typically results in better performance.

\section{Conclusion}
In this work, we present a framework to extract prior information from pretrained language models to improve spoken language understanding. We express the prior knowledge in the form of synthetic data and propose to use an augmented language format to generate both sentences and intent and slot labels simultaneously. The generated data as well as the small real dataset are used to fine-tune a classifier to predict intent and slot labels jointly. We also utilize the mixout regularization for classifiers to resist label noise in generated data. On three public benchmark datasets, we achieve superior performance over baselines.

For future directions, we will apply the augmented language format for other tasks, such as named entity recognition. We will also explore the classical few-shot setting, where we have other related tasks to accumulate common knowledge. 

\bibliographystyle{unsrt}
\bibliography{main}

\clearpage
\appendix
\setcounter{figure}{0}
\setcounter{table}{0}
\setcounter{footnote}{0}
\renewcommand\thefigure{\thesection.\arabic{figure}}
\renewcommand\thetable{\thesection.\arabic{table}}

\section{Proof}
\label{sec:proof}
\setcounter{theorem}{1}
\begin{theorem}
Assume the neural network is infinitely wide, the initialization satisfies $f(x;u) = 0$, then training the mixout regularized network $f(x;\Phi(\mathbf{w};\mathbf{u},\mathbf{M}))$ with MSE loss $\frac{1}{2}\Vert f(x;\Phi(\mathbf{w};\mathbf{u},\mathbf{M}))-y\Vert_2^2$ leads to the kernel ridge regression solution:
\begin{equation*}
    f^*(x;w)=k(x, \mathbf{X})^T(k(\mathbf{X}, \mathbf{X})+\lambda^2\mathbf{I})^{-1}\mathbf{Y}, \quad \lambda^2 = \frac{m\sigma^2}{\mu^2},
\end{equation*}
where $\mathbf{X}$ and $\mathbf{Y}$ are training data and labels. The kernel $k(\cdot,\cdot)$ is induced by the initial network and named Neural Tangent Kernel:
\begin{equation*}
    k(x,x') = \langle\phi(x), \phi(x')\rangle = \langle\nabla_\mathbf{\theta} f(x;\mathbf{u}),\nabla_\mathbf{\theta} f(x';\mathbf{u})\rangle.
\end{equation*}
\end{theorem}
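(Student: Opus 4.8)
\emph{Proof idea.} The plan is to reduce the problem to linear ridge regression via the neural tangent kernel linearization and then read off the closed form. First I would invoke the infinite-width (lazy training) regime: since $f(x;\mathbf{u})=0$ by assumption, the first-order Taylor expansion of the network around the pretrained parameters $\mathbf{u}$ is exact along the entire training trajectory, so $f(x;\mathbf{w})=\phi(x)^{T}(\mathbf{w}-\mathbf{u})$ with $\phi(x)=\nabla_{\mathbf{\theta}}f(x;\mathbf{u})$. Writing $\Psi$ for the matrix whose rows are $\phi(x_i)^{T}$ over the training set, the MSE loss becomes $\mathcal{L}(\mathbf{w})=\frac{1}{2}\Vert\Psi(\mathbf{w}-\mathbf{u})-\mathbf{Y}\Vert_2^{2}$, which is quadratic in $\mathbf{w}$.

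Next I would apply Theorem~\ref{th:mixout}. Because the loss is now quadratic the bound holds with equality, so optimizing the mixout-regularized network in expectation over the mask $\mathbf{M}$ coincides with minimizing the ridge objective
\begin{equation*}
\tilde{\mathcal{L}}(\mathbf{w})=\frac{1}{2}\Vert\Psi(\mathbf{w}-\mathbf{u})-\mathbf{Y}\Vert_2^{2}+\frac{\lambda^{2}}{2}\Vert\mathbf{w}-\mathbf{u}\Vert_2^{2},\qquad\lambda^{2}=\frac{m\sigma^{2}}{\mu^{2}}.
\end{equation*}
This is strictly convex in the displacement $\mathbf{v}:=\mathbf{w}-\mathbf{u}$, so gradient flow --- and, in expectation, the mask-stochastic gradient updates --- converges to its unique minimizer $\mathbf{v}^{\star}=(\Psi^{T}\Psi+\lambda^{2}\mathbf{I})^{-1}\Psi^{T}\mathbf{Y}$. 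Applying the push-through identity $(\Psi^{T}\Psi+\lambda^{2}\mathbf{I})^{-1}\Psi^{T}=\Psi^{T}(\Psi\Psi^{T}+\lambda^{2}\mathbf{I})^{-1}$ together with $\phi(x)^{T}\Psi^{T}=k(x,\mathbf{X})^{T}$ and $\Psi\Psi^{T}=k(\mathbf{X},\mathbf{X})$, the learned predictor is $f^{\star}(x;\mathbf{w})=\phi(x)^{T}\mathbf{v}^{\star}=k(x,\mathbf{X})^{T}(k(\mathbf{X},\mathbf{X})+\lambda^{2}\mathbf{I})^{-1}\mathbf{Y}$, exactly the claimed kernel ridge regression solution; the hypothesis $f(x;\mathbf{u})=0$ is what removes the constant term, and it is ensured by the symmetric-head initialization trick. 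For the "sufficiently wide" statement I would replace the exact linearization by the standard quantitative NTK perturbation estimates (the network and its Jacobian stay within $O(\text{width}^{-1/2})$ of their values at initialization along the whole trajectory), so the argument goes through up to a vanishing error, as in \cite{hu2019simple}.

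The step I expect to be the main obstacle is matching the regularizer produced by Theorem~\ref{th:mixout} to the isotropic $\lambda^{2}\mathbf{I}$ appearing in the statement: computing $\mathbb{E}_{\mathbf{M}}[\mathcal{L}(\Phi(\mathbf{w};\mathbf{u},\mathbf{M}))]$ directly for the quadratic loss gives a penalty weighted by the diagonal of the feature Gram matrix $\Psi^{T}\Psi$ rather than a plain multiple of the identity. I would handle this in the spirit of the paper, absorbing the (essentially homogeneous, in the wide limit) diagonal scale into the constant $m$ of Theorem~\ref{th:mixout}, so that the effective penalty is $\frac{m\sigma^{2}}{2\mu^{2}}\Vert\mathbf{w}-\mathbf{u}\Vert_2^{2}$ with $\lambda^{2}=m\sigma^{2}/\mu^{2}$. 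A secondary point is to confirm that taking the per-step expectation over $\mathbf{M}$ in the stochastic updates does drive the dynamics to the minimizer of $\tilde{\mathcal{L}}$; this follows from strict convexity of $\tilde{\mathcal{L}}$ plus a routine stochastic-gradient-in-expectation argument.
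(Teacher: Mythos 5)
Your proposal is correct and follows essentially the same route as the paper's proof: linearize the infinitely wide network via the NTK so that $f(x;\theta)=\phi(x)^{T}(\theta-\mathbf{u})$, invoke Theorem~\ref{th:mixout} with equality for the resulting quadratic loss to obtain the ridge objective in $\mathbf{w}-\mathbf{u}$, and solve the first-order condition (your explicit push-through identity is exactly the step the paper performs implicitly when it writes $\mathbf{w}-\mathbf{u}=\phi(\mathbf{X})(k(\mathbf{X},\mathbf{X})+\lambda^{2}\mathbf{I})^{-1}\mathbf{Y}$). Your side remark about the mask expectation producing a penalty weighted by the diagonal of the feature Gram matrix rather than an isotropic one is a fair observation, but the paper, like you, absorbs this into the constant $m$ guaranteed by Theorem~\ref{th:mixout}, so the two arguments coincide.
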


\begin{proof}
According to the neural tangent kernel theorem \cite{jacot2018neural,arora2019exact}, infinitely wide network $f(\cdot;\theta)$ is equivalent to its first-order approximation:
\begin{equation*}
    f(x;\theta) = f(x;u) + \langle \nabla_\theta f(x;u),\theta-u\rangle.
\end{equation*}
The approximation is also accurate for sufficiently wide network.
Since the initialization $f(x;u)=0$, if we denote features as $\phi(x) = \nabla_\theta f(x;u)$, then
\begin{equation*}
    f(x;\theta)=\phi(x)^T(\theta-u).
\end{equation*}
Based on Theorem 1, the MSE loss (a quadratic loss) on mixout regularized parameters is equivalent to a $L_2$ regularized loss, i.e.,
\begin{align*}
    \mathcal{L} &= \frac{1}{2}\Vert f(x;w) - y \Vert_2^2 + \frac{m \sigma^2}{2\mu^2}\Vert w - u \Vert_2^2 \\
    &= \frac{1}{2}\Vert \phi(x)^T(w-u) - y \Vert_2^2 + \frac{m \sigma^2}{2\mu^2}\Vert w - u \Vert_2^2.
\end{align*}
The loss on the whole training set $\mathbf{X}$ and $\mathbf{Y}$ is 
\begin{equation*}
    \mathcal{L} = \frac{1}{2}\Vert \phi(\mathbf{X})^T(w-u)-\mathbf{Y} \Vert_2^2 + \frac{m \sigma^2}{2 \mu^2} \Vert w-u \Vert_2^2.
\end{equation*}
By setting $\frac{\partial \mathcal{L}}{\partial w} = 0$, we have
\begin{equation*}
    w-u = \phi(\mathbf{X})(k(\mathbf{X},\mathbf{X}) + \lambda^2 \mathbf{I})^{-1}\mathbf{Y}.
\end{equation*}
Therefore, 
\begin{align*}
    f^*(x;w) &= \phi(x)^T(w-u) \\
    &= k(x, \mathbf{X})^T(k(\mathbf{X}, \mathbf{X})+\lambda^2\mathbf{I})^{-1}\mathbf{Y}.
\end{align*}
\end{proof}

\section{Augmented Language Format}

Figure~\ref{fig:aug_lan} shows how to convert between the classic BIO labeled data and our proposed augmented language format. We use special tokens to separate the utterances and their corresponding labels. The span of each slot type is explicitly marked with brackets. All labels are converted to natural words so that the pretrained tokenizer can directly process them. The conversion is bijective, therefore we can transform the augmented sentence bask to BIO format without loss of any information.

\begin{figure*}
    \centering
    \includegraphics[width=0.8\linewidth]{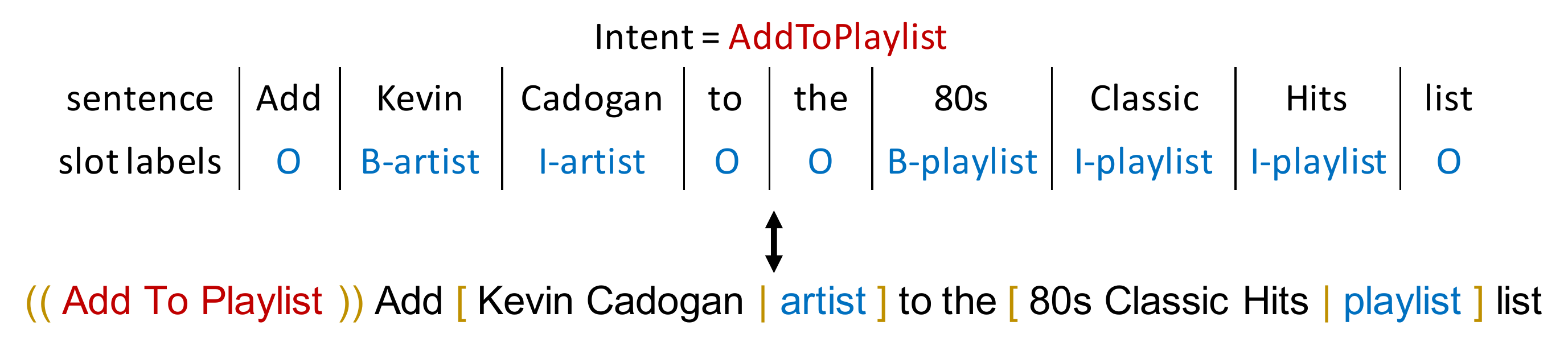}
    \caption{Convert a BIO labeled sentence to the augmented format.}
    \label{fig:aug_lan}
\end{figure*}

\section{Generation}

Figure~\ref{fig:generation} shows some samples from our generator for SNIPS dataset. The generations are diverse and fluent. The colored phrases are novel generations not presented in the training data. We can see that our generator is capable of drawing innovative tokens corresponding to their slot labels. For example, the generated artist, movie and city are all valid names. The superior generation ability comes from the pretrained language model. Our generation strategy essentially extracts informative prior knowledge for the pretrained language models.

\begin{figure*}
    \centering
    \includegraphics[width=0.98\linewidth]{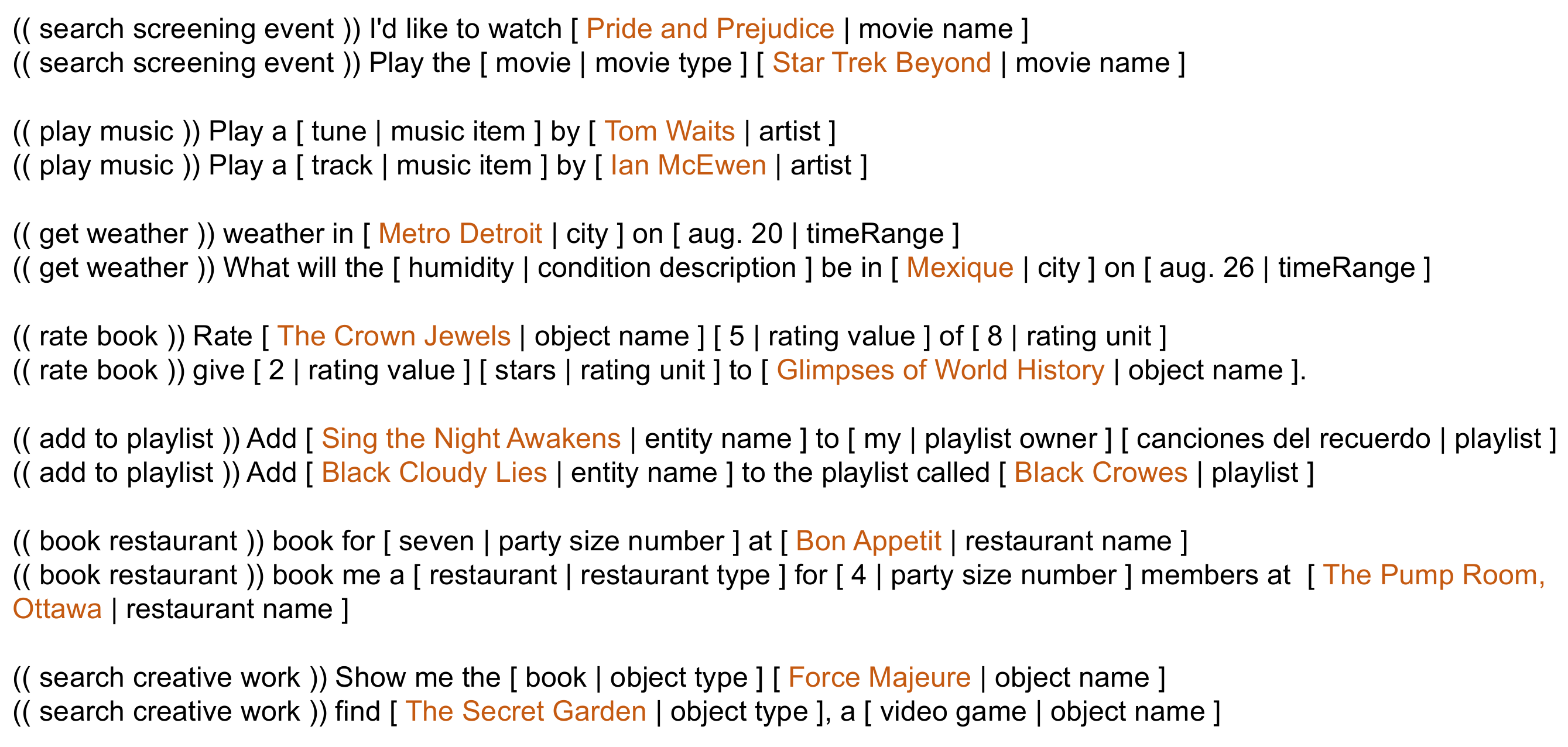}
    \caption{Some examples of the generated data. The generator is conditioned on the augmented sentences masked with multiple spans. The colored tokens are novel phrases not presented in the training data.}
    \label{fig:generation}
\end{figure*}

\end{document}